\documentclass{esannV2}
\usepackage[dvips]{graphicx}
\usepackage[utf8]{inputenc}
\usepackage{amssymb,amsmath,array,bm,amsthm,mathtools,booktabs}

\voffset 0 cm \hoffset 0 cm \addtolength{\textwidth}{0cm}
\addtolength{\textheight}{0cm}\addtolength{\leftmargin}{0cm}

\DeclareMathOperator*{\argmin}{argmin}

\theoremstyle{definition}
\newtheorem{theorem}{Theorem}

\newtheorem{Definition}[theorem]{Definition}

\newtheorem{Lemma}[theorem]{Lemma}
\newtheorem{Assumption}[theorem]{Assumption}

\begin{document}

\title{Constraint Guided Gradient Descent: Guided Training with Inequality Constraints}

\author{Quinten Van Baelen$^{1,2,3}$ and Peter Karsmakers$^{1,2,3}$

\thanks{This research received funding from the Flemish Government (AI ResearchProgram). This research has received support of Flanders Make.}

\vspace{.3cm}\\

1-KU Leuven, Dept. of Computer Science, ADVISE-DTAI, \\ Kleinhoefstraat 4, B-2440 Geel, Belgium. \\

2-Leuven.AI - KU Leuven institute for AI. \\
3-Flanders Make - DTAI-FET.\\
\texttt{\{quinten.vanbaelen,peter.karsmakers\}@kuleuven.be}
}

\maketitle

\begin{abstract}
Deep learning is typically performed by learning a neural network solely from data in the form of input-output pairs ignoring available domain knowledge. In this work, the Constraint Guided Gradient Descent (\texttt{CGGD}) framework is proposed that enables the injection of domain knowledge into the training procedure.  The domain knowledge is assumed to be described as a conjunction of hard inequality constraints which appears to be a natural choice for several applications. Compared to other neuro-symbolic approaches, the proposed method converges to a model that satisfies any inequality constraint on the training data and does not require to first transform the constraints into some ad-hoc term that is added to the learning (optimisation) objective. Under certain conditions, it is shown that \texttt{CGGD} can converges to a model that satisfies the constraints on the training set, while prior work does not necessarily converge to such a model. It is empirically shown on two independent and small data sets that \texttt{CGGD} makes training less dependent on the initialisation of the network and improves the constraint satisfiability on all data.  
\end{abstract}

\section{Introduction}

Machine learning and especially deep learning are successful in many research areas. In most cases, supervised learning is employed that, based on example input-output pairs, automatically finds a function that relates the input to the corresponding output data. However, available domain knowledge is typically ignored requiring it to be rediscovered by the learning algorithm. When domain knowledge can be inserted during the learning stage, it is expected that learning becomes more efficient, meaning that less example pairs are required to let a model represent the desired relation.

This study restricts itself to the use of a conjunction of hard inequality constraints. Hence, models should satisfy all imposed inequality constraints for all the data (even for unseen data, not used during learning, the model should satisfy the constraints). This work proposes a novel algorithm Constraint Guided Gradient Descent (\texttt{CGGD}), which adds supervision to the learning cycle by means of hard inequality constraints. \texttt{CGGD} aims at solving the potential numerical problems and the crispness issues that occur in previous work.  Moreover, in \texttt{CGGD} the constraints do not need to be differentiable, and they provably dominate the gradient of the loss function during training when they are not satisfied.

There are two main classes of approaches that enable injecting constraints in the training procedure. The first class of approaches uses fuzzy-logic \cite{DL2TrainingAndQuerying,Bach2017}. Here, the constraints are replaced by almost everywhere smooth functions. As said in \cite{SemanticLoss}, this approach has as its main downside that this transformation typically leads to a loss of the crisp formulation of the constraints. Additionally, there can occur numerical problems when optimising the new objective. For example, the gradients of the loss function and the constraints can cancel out each other. However, \texttt{CGGD} solves both the crispness issue as well as the vanishing gradient phenomenon.

The second class of approaches can be summarised as using (probabilistic) logic reasoning in order to define gradients for training the network and/or as regularisation. The constraints in this setting are logical formulas, where the variables in the formulas correspond to Boolean, probabilistic or discrete output variables of the network. The methods that are most related to \texttt{CGGD} are: \texttt{NeuroLog} \cite{Tsamoura2021}, \texttt{DeepProbLog} \cite{Manhaeve2021}, and the semantic loss \cite{SemanticLoss}. Each method does not require the theory to be differentiable, but uses results from reasoning on the logic theory to tune the gradient with which the network is updated. All three methods are not applicable in the setup of this work because adjusting the truth value of an inequality constraint requires an additional reasoning mechanism. 

The main contributions of this work are: (a) the design of the novel \texttt{CGGD} method that learns a neural network model for a regression task while satisfying a conjunction of hard inequality constraints, (b) the empirical observation that incorporating prior knowledge in terms of inequality constraints can make learning less dependent on the initialisation of the model parameters.

\section{Constraint Guided Gradient Descent}
\label{sec:CGGD}

This work targets an algorithm that searches for the weights of a neural network $\Phi$ by optimising some loss function $L$ while letting the network satisfy a fixed finite set of predefined hard inequality constraints $\{C_i\}_{i=1}^N$ on the training set. More formally, this can be expressed as the constrained optimisation problem:
	\begin{align}
		\nonumber
		& \argmin \limits_{\bm{W}} && L(\bm{x}, \Phi(\bm{x}), \bm{y}, \bm{W}) \\ \nonumber
		& \,\,\,\,\, \text{s.t.} && \forall (x,\Phi (x)) \in (\bm{x}, \Phi(\bm{x})): C_i (x, \Phi(x)) \leq 0, \mbox{ for } i=1,\ldots,N.
	\end{align}
In the previous equation, $\bm{x}$ and $\bm{y}$ denote a set of input vectors and output vectors respectively, $x$ and $y$ denote a single input vector and output vector respectively, $\Phi(x)$ denotes the predictions of the network as well as any prediction of any hidden layer, and $\bm{W}$ denotes the collection of trainable weight matrices of the model. The set of models that satisfy all constraints for a set of training examples is called the \textbf{feasible region} $FR$. \texttt{CGGD} aims at finding a model in $FR$ that locally minimises $L$.

The constrained optimisation problem is solved by optimising the loss function with gradient descent and adjusting the update step according to the constraints in case they are not satisfied. When some constraints are not satisfied, then for each unsatisfied constraint a direction is computed to move to in order to satisfy the constraint eventually. Hence, the update step for a trainable parameter $w$ is defined by
    \begin{equation} \label{def:UpdateStep} 
    	w^{(i+1)}:= \,w^{(i)} - \eta_i (\nabla L(\Phi(x))   + 1.5 \  \, \overset{\rightarrow}{dir}(C(x,\Phi(x))) \max\{\varepsilon,\|\nabla L(\Phi(x)) \|\}), 
    \end{equation}
where $\eta_i$ denotes the step size for iteration $i$, $\overset{\rightarrow}{dir}(C(x,\Phi(x)))$ denotes the direction corresponding to the constraints $C:=\{C_i\}_{i=1}^N$, $1.5$ is a factor that is referred to the rescale factor that controls the relative weight of the constraints compared to the gradient of the loss function, $\|\cdot\|$ denotes the $L_2$-norm, and 
$\varepsilon>0$ is a lower bound for the relative weight compared to the gradient of the loss function to allow to move past local optima outside $FR$. Note that the proposed update step \eqref{def:UpdateStep} does not introduce a new hyperparameter that needs to be chosen correctly, and the rescale factor is set larger than 1, which is equivalent with the constraints being more important than the loss function.

The following assumption is needed to guarantee convergence when the constraints do not have any influence on the optimisation procedure at some point in time and onwards, for example when the initialisation and every point in the optimisation procedure are in $FR$. 

\begin{Assumption}\label{axiom:NonConvexOptimal}
    Let $L:\mathbb{R}^n\to\mathbb{R}$ satisfy conditions needed to let a non-convex optimisation algorithm based on gradient descent converge to a local solution.
\end{Assumption}

The main result of this paper is stated now. Note that all conditions that are stated, with the exception of Assumption  \ref{axiom:NonConvexOptimal}, are used for proving convergence to a point on the boundary of $FR$, which is in $FR$ when it is closed. 

\begin{theorem}\label{thm:ConvergenceTheorem}
    Let $L:\mathbb{R}^n\to\mathbb{R}$ be a loss function satisfying Assumption \ref{axiom:NonConvexOptimal} and for which $\nabla L$ is $M$-Lipschitz continuous. Consider the inequality constraints $\{C_i\}_{i=1}^N$ for some strictly positive integer $N$. Let $\overset{\rightarrow}{dir}(C(x,\Phi(x)))$ be the direction of the shortest path with respect to the Euclidean distance from $FR$ to $w$ for $w\in \mathbb{R}^{n}\setminus FR$. Then, there exists a sequence $\{\eta_j\}_j$ such that the iteration procedure defined by applying \eqref{def:UpdateStep} converges to a point in the closure of $FR$.
\end{theorem}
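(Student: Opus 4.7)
The plan is to construct the step-size sequence $\{\eta_j\}_j$ inductively, controlling in parallel two quantities: the Euclidean distance $d_i := d(w^{(i)},FR)$ and the per-step displacement $\|w^{(i+1)}-w^{(i)}\|$. The geometric fact underlying the argument is that $\overset{\rightarrow}{dir}$ is the unit vector from $p^{(i)} := \mathrm{proj}_{FR}(w^{(i)})$ to $w^{(i)}$, so that subtracting a positive multiple of it from $w^{(i)}$ moves the iterate straight toward $p^{(i)}\in FR$.

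The key computation is a distance-contraction estimate obtained by decomposing $\nabla L(w^{(i)}) = \alpha_i\,\overset{\rightarrow}{dir} + v_\perp^{(i)}$ along and orthogonal to $\overset{\rightarrow}{dir}$. The update rewrites as
\begin{equation*}
w^{(i+1)} - p^{(i)} = \bigl(d_i - \eta_i\beta_i\bigr)\overset{\rightarrow}{dir} - \eta_i v_\perp^{(i)}, \quad \beta_i := \alpha_i + 1.5\max\{\varepsilon,\|\nabla L\|\}.
\end{equation*}
Since $|\alpha_i|\le\|\nabla L\|$, one gets $\beta_i\ge\tfrac12\max\{\varepsilon,\|\nabla L\|\}\ge\tfrac{\varepsilon}{2}>0$, so the parallel motion is strictly toward $FR$. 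Squaring and bounding $d_{i+1}\le\|w^{(i+1)}-p^{(i)}\|$ yields $d_{i+1}^2 \le d_i^2 - 2\eta_i\beta_i d_i + \eta_i^2(\beta_i^2+\|v_\perp^{(i)}\|^2)$. Using the $M$-Lipschitz continuity of $\nabla L$ to bound $\|\nabla L\|$ (and therefore $\beta_i$ and $\|v_\perp^{(i)}\|$) on any bounded set traversed, choosing $\eta_i$ proportional to $d_i$ with a small enough constant makes the quadratic term dominated, giving $d_{i+1}\le\theta d_i$ for a uniform $\theta\in(0,1)$. Geometric decay of $d_i$ then forces $d_i\to 0$, and since $\|w^{(i+1)}-w^{(i)}\| = O(\eta_i) = O(d_i)$ is itself geometric, $\{w^{(i)}\}$ is Cauchy and converges to some $w^{\ast}$ with $d(w^{\ast},FR)=0$, i.e., $w^{\ast}\in\overline{FR}$. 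For iterations where $w^{(i)}\in FR$, the $\overset{\rightarrow}{dir}$ contribution vanishes and the update is pure gradient descent on $L$; Assumption~\ref{axiom:NonConvexOptimal} supplies step sizes producing local convergence, which I pair with a summability cap that preserves the Cauchy property.

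The hard part will be keeping the iterates bounded so that Lipschitz continuity translates into a uniform bound on $\|\nabla L(w^{(i)})\|$; without bounded sublevel sets of $L$ or an a priori radius, the quadratic coefficient in the contraction estimate could blow up and defeat the argument. The cleanest remedy is to shrink $\eta_i$ further whenever $\|\nabla L(w^{(i)})\|$ is large, which is legitimate because the theorem asserts only existence of some sequence. A secondary subtlety is ruling out overshoot of $FR$: one enforces $\eta_i\beta_i<d_i$ so that the algorithm approaches rather than crosses the boundary, preventing oscillation between the interior and exterior regimes and allowing the contraction estimate to be applied at every iteration.
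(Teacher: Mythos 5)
Your argument is sound in its essentials but takes a genuinely different route from the paper. The paper splits the work into two lemmas: Lemma~\ref{lem:DecreasingUpdateStep} forces the norm of the update step to decrease by shrinking $\eta_{j+1}$ relative to $\eta_j$, using the $M$-Lipschitz property only to control $\|\nabla L(x^{(j+1)})\|$ in terms of $\|\nabla L(x^{(j)})\|$ (so no global gradient bound is ever needed), while Lemma~\ref{lem:DistanceToFRSmall} gives a worst-case triangle-inequality estimate $d(x^{(j+1)},FR)\le K-\tfrac{\eta_j}{2}m(x^{(j)})$; the two are then combined informally, with $\eta_j$ decreased only once the iterate gets within $\tfrac32\eta_j m$ of $FR$, and convergence follows because the step size tends to zero. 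You instead decompose $\nabla L$ along and orthogonal to $\overset{\rightarrow}{dir}$, get the exact Pythagorean expression for $\|w^{(i+1)}-p^{(i)}\|^2$, and note $\beta_i\ge\tfrac12 m$ --- which is precisely the paper's observation that $\tfrac32 m-\|\nabla L\|\ge\tfrac12 m$, just packaged as a second-order rather than first-order estimate. Tying $\eta_i$ to $d_i$ then yields a geometric contraction $d_{i+1}\le\theta d_i$ and a summable displacement, hence a Cauchy sequence with an explicit limit in $\overline{FR}$; this is a cleaner and more quantitative convergence mechanism than the paper's. The price is the uniform gradient bound along the trajectory, which you correctly flag; the bootstrap does close it (total displacement is $O(d_0/(1-\theta))$, so all iterates stay in a fixed ball on which Lipschitz continuity bounds $\|\nabla L\|$, provided the contraction constant is fixed with respect to that ball up front). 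Both proofs share the same soft spot: the regime where iterates alternate between the interior of $FR$ (pure gradient descent under Assumption~\ref{axiom:NonConvexOptimal}) and its exterior is only sketched --- your ``summability cap'' plays the same role as the paper's remark that the excursion distance ``can be made bounded and decreasing'' --- but on the purely exterior trajectory your treatment is, if anything, tighter than the paper's.
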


The proof of this theorem (Appendix A) consists of (i) showing that the size of the update step can be decreased over different iterations by decreasing $\eta_i$ as a function of $\varepsilon$ and $\|\nabla L\|$, and (ii) showing that the point obtained from one iteration is closer to the feasible region than the previous point. The direction of the constraints being defined by the shortest path to $FR$ is a sufficient condition but not a necessary condition. For example, if $FR=[1,2]\cup[3,4]$. Then the direction of the constraint can be chosen as $-1$ for $w<3$ and $1$ for $w>4$. This leads to \texttt{CGGD} converging to $w\in [3,4]$ when initialised at $w=2.1$. 

An example is given to illustrate the importance of Theorem \ref{thm:ConvergenceTheorem}. Let $L:\mathbb{R}\to\mathbb{R}: w \mapsto (w-2)(w-4)(w-3)(w-1.5)(w-1)(w-2.75)(w-5)^2+7$. Suppose that the constraint is given by $(w-1)(w-2)(w-3)(w-4)\leq 0$ for $w\in\mathbb{R}$. This leads to the feasible region being $[1,2]\cup[3,4]$. From the visualisation of $L$ in Figure \ref{Fig:derivatesfuzzyCGGD} is clear that the local minima are given by $w\approx 1.16$, $w=2$, $w\approx 3.63$. Moreover, Figure \ref{Fig:derivatesfuzzyCGGD} illustrates the value of the update steps for a fuzzy loss function, which adds the constraint as regularisation term to the learning objective before optimising with gradient descent, and \texttt{CGGD}. Note that a fuzzy approach requires the constraints to be almost everywhere differentiable, while this is not necessary for constraints in \texttt{CGGD}. From determining the points where the update step is equal to 0 or the sign of the update step is negative to the left and positive to the right of the point in case of a discontinuity, it follows that the fuzzy approach can converge to $w\approx1.16$, $w\approx1.71$ (when initialised at this point), $w\approx 2.27$, $w\approx 2.82$ (when initialised at this point), and $w=3.63$, while \texttt{CGGD} can converge to $w\approx1.16$, $w\approx 1.71$ (when initialised at this point), $w=2$ and $w\approx3.63$.  This illustrates the fact that the gradient of a fuzzy loss function can vanish even when constraints are not satisfied. While the points that can be obtained as convergence points of \texttt{CGGD} satisfy the constraints. 

\begin{figure}[t]
	\centering 
	\includegraphics[scale=0.34]{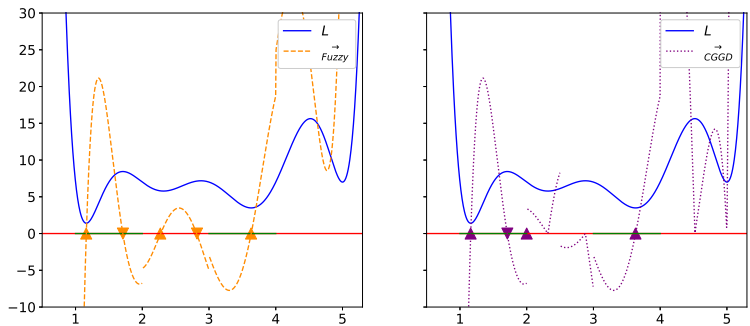}
	\caption{Loss function with the gradient of the fuzzy loss function (left) and the \texttt{CGGD} update step (right). The local solutions of the optimisation procedure are indicated with triangles on the horizontal axis. The triangles pointing upwards and downwards indicate if the convergence is stable or not, respectively. The convergence is not stable when it can only convergence if initialised at this point. The feasible region is shown in green on the horizontal axis.}
	\label{Fig:derivatesfuzzyCGGD}
\end{figure}

Another major difference with fuzzy approaches that optimises an objective function with gradient descent require almost everywhere differentiable constraints, while in \texttt{CGGD} the constraints can be non-differentiable for a set of strictly positive measure. For example, consider the constraint $-2\leq C(w)\leq 2$, where $C:\mathbb{R}\to\mathbb{R}:w\mapsto w \chi_{\mathbb{Q}}(w) - w \chi_{\mathbb{R}\setminus \mathbb{Q}}(w)$ with $\chi_A$ the indicator function on the set $A$. Observe that this function is only continuous in $w=0$. Therefore, it is not almost everywhere differentiable. Note that the direction of the shortest path for \texttt{CGGD} can be taken $-1$ if $w<-2$ and $1$ if $w>2$. 

\section{Experiments}
\label{sec:Experiments}

The presented method, \texttt{CGGD}, is tested for its performance compared to two baselines\footnote{See \texttt{https://github.com/KULeuvenADVISE/CGGD} for the code of the experiments.}. The size of the data sets is 750 examples. The division into training, validation and test set is 200/250/250. The first baseline (Baseline) is the model trained without any constraints. The second baseline (Fuzzy) is obtained using the loss function used in \texttt{DL2} \cite{DL2TrainingAndQuerying}. The training procedure discussed for \texttt{DL2} is not used, since it is not feasible to adjust it to the constraints considered here. Each setup is repeated 4 times with different initialisations of the network, and the mean and standard deviation of each metric are reported.

The first data set is the Bias Correction\footnote{\hbox{Available on }\texttt{https://archive.ics.uci.edu/ml/datasets/Bias+correction+of\\+numerical+prediction+model+temperature+forecast} \cite{Cho2020}.} (BC) data set. The task is to predict the maximal and minimal temperature of the next day given some information of the current day. The constraints considered for this data set are: upper and lower bounds on the values for both the minimal temperature and the maximal temperature, and the constraint that the minimal temperature should be smaller or equal than the maximal temperature. 

The second data set is the Family Income\footnote{Available on \texttt{https://www.kaggle.com/grosvenpaul/family-income-and-expenditure}.} (FI) data set. The task is to predict certain expenses of a family given information about the household income and some information about the properties owned by the household such as the number of personal computers. Also here for this data set, upper and lower bounds are set on all the predicted values. Moreover, the total food expenditure prediction should be larger than the sum of the prediction of the bread and cereals, the meat, and the vegetables expenditure. The last constraint is that the total income of the family (input) should be larger or equal than the sum of all the expenses.

While  \texttt{CGGD} can be more generically applied to different architectures, in this work, only dense neural networks are considered. The hidden layers have ReLU activation functions and the final layer has a linear activation function. The number of hidden layers are different for each data set and are only chosen such that all the constraints could be satisfied on the training set.

All networks are trained and tested using the Means Squared Error (MSE) as loss function. The satisfaction ratio (SR) is introduced as a metric to indicate how many constraints are satisfied. The satisfaction ratio is the ratio between the total number of satisfied constraints and the total number of constraints. 

\section{Results}
\label{sec:Results}

The results of the experiments are shown in Table \ref{tab:ResultsExperiment}. The experiments show that the proposed method has less problems with having a decent or good performance for small training sets compared to the other methods. In particular, the results indicate that \texttt{CGGD} seems to depend less on the initialisation of the network. This was also shown in the one-dimensional examples in Section 2. It is interesting to observe the same phenomena for neural networks as well in the experiments. The main reason for this is that loss functions of neural networks are known to be highly non-convex, which increases the likelihood of having a vanishing gradient in fuzzy approaches as illustrated in Section 2.

\begin{table*}[t]
	\centering
	\fontsize{9}{10} \selectfont
	\begin{tabular}{lcccc}
		\toprule
		 & \multicolumn{2}{c}{BC} & \multicolumn{2}{c}{FI} \\
		 \cmidrule(lr){2-3}\cmidrule(lr){4-5}
		Method & MSE & SR & MSE & SR \\
		\midrule
		Baseline & 0.7441$\pm$0.5593 & 74.90$\pm$10.21  & \textbf{0.0012$\pm$0.0001} & 98.69$\pm$0.22 \\
		Fuzzy & 0.0129$\pm$0.0049 & 99.52$\pm$\,\,\,0.48 & 0.0066$\pm$0.0020 & 95.87$\pm$0.26  \\
		\texttt{CGGD} & \textbf{0.0079$\pm$0.0084} & \textbf{99.96$\pm$\,\,\,0.05} & 0.0017$\pm$0.0005  & \textbf{99.89$\pm$0.12} \\
		\bottomrule
	\end{tabular}
	\caption{The mean and standard deviation for the mean squared error (MSE) and satisfaction ratio (SR) for the Bias Correction (BC) data set and the Family Income (FI) data set. The best results for each setup are shown in bold.}
	\label{tab:ResultsExperiment}
\end{table*}

\section{Conclusion}
\label{sec:Conclusion}

The proposed method, \texttt{CGGD}, enables the use of a conjunction of hard inequality constraints during the learning cycle of neural networks. The method succeeds in fixing the crispness issue by not transforming the constraints, and the vanishing gradient phenomenon by including a rescale factor that is strictly larger than 1 and the lower bound on the norm of the gradient of the loss function. For the purpose of regression on small data sets, the performance in terms of mean squared error and constraints satisfiability was empirically verified on two data sets. The loss was comparable to the other approaches, but the satisfiability of the constraints was always the highest for \texttt{CGGD}.

\begin{footnotesize}

\bibliographystyle{unsrt}
\bibliography{CGGD-ESANN-arXiv}

\end{footnotesize}

\appendix

\section{Proof of Theorem \ref{thm:ConvergenceTheorem}}
In this Appendix a full proof of Theorem \ref{thm:ConvergenceTheorem} is given. The first lemma shows that the updating step can be made smaller outside of the feasible region. This is a necessary property to have convergence to a point on the boarder of the feasible region. Note that the learning rate is chosen in terms of the norm of the gradient of $L$, which is allowed because in order to compute the update step (even without the step size) this value needs to be computed.

\begin{Lemma}\label{lem:DecreasingUpdateStep}
    Let $0<\varepsilon<1$ be fixed and $L:\mathbb{R}^n\to\mathbb{R}$ whose gradient $\nabla L$ is $M$-Lipschitz. Denote by $\overset{\rightarrow}{dir}(C(x))$ the shortest path from the feasible region $FR$ towards $x$ for each $x\in\mathbb{R}^n\setminus FR$. Then the size of the update step outside the feasible region decreases if
        \begin{equation*}
            0 < \eta_{j+1} \leq  \min\left\{\frac{2\eta_j\varepsilon}{25 \eta_j M +10}, \frac{\eta_j}{5}\right\}, \quad \mbox{when } \|\nabla L(x^{(j+1)})\| \geq \varepsilon,
        \end{equation*}
    and 
        \begin{equation*}
            0 < \eta_{j+1} \leq \frac{\eta_j\max\{\varepsilon,\|\nabla L(x^{(j)})\|\}}{5 \varepsilon}, \quad \mbox{when } \|\nabla L(x^{(j+1)})\| < \varepsilon.
        \end{equation*}
\end{Lemma}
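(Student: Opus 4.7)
The plan is to introduce the shorthand $u^{(k)}:=\nabla L(x^{(k)})+1.5\,\overset{\rightarrow}{dir}(C(x,\Phi(x)))\max\{\varepsilon,\|\nabla L(x^{(k)})\|\}$, so that the size of the update step at iteration $k$ is $S_k:=\eta_k\|u^{(k)}\|$; the goal is then to find sufficient conditions on $\eta_{j+1}$ guaranteeing $S_{j+1}\le S_j$. Since $\overset{\rightarrow}{dir}$ is a unit vector, the triangle inequality and its reverse give directly
\begin{equation*}
0.5\,\max\{\varepsilon,\|\nabla L(x^{(k)})\|\}\;\le\;\|u^{(k)}\|\;\le\;2.5\,\max\{\varepsilon,\|\nabla L(x^{(k)})\|\},
\end{equation*}
(the lower bound is checked by splitting on whether $\|\nabla L(x^{(k)})\|$ exceeds $\varepsilon$). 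Combining the upper bound at iteration $j+1$ with the lower bound at iteration $j$ reduces $S_{j+1}\le S_j$ to the single clean sufficient condition
\begin{equation*}
\eta_{j+1}\;\le\;\frac{\eta_j\,\max\{\varepsilon,\|\nabla L(x^{(j)})\|\}}{5\,\max\{\varepsilon,\|\nabla L(x^{(j+1)})\|\}}.
\end{equation*}

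Next I would case split on $\|\nabla L(x^{(j+1)})\|$. If $\|\nabla L(x^{(j+1)})\|<\varepsilon$, the denominator collapses to $5\varepsilon$ and one reads off the second inequality in the lemma verbatim. If instead $\|\nabla L(x^{(j+1)})\|\ge\varepsilon$, the denominator is $5\|\nabla L(x^{(j+1)})\|$, and I would invoke the $M$-Lipschitz continuity of $\nabla L$ to write $\|\nabla L(x^{(j+1)})\|\le\|\nabla L(x^{(j)})\|+MS_j$, control $S_j$ by the upper bound $2.5\,\eta_j\max\{\varepsilon,\|\nabla L(x^{(j)})\|\}$ derived above, and dominate $\|\nabla L(x^{(j)})\|$ by the same maximum. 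This yields
\begin{equation*}
\|\nabla L(x^{(j+1)})\|\;\le\;(1+2.5\,M\eta_j)\,\max\{\varepsilon,\|\nabla L(x^{(j)})\|\},
\end{equation*}
so that when this estimate is plugged into the sufficient condition, the common factor $\max\{\varepsilon,\|\nabla L(x^{(j)})\|\}$ cancels and one is left with the clean bound $\eta_{j+1}\le 2\eta_j/(10+25M\eta_j)$. The $\varepsilon$ factor that appears in the first entry of the lemma's minimum only makes the bound more restrictive (since $\varepsilon<1$), and the entry $\eta_j/5$ is a clean uniform upper bound that is always at least as large as the first; taking the minimum therefore still produces a legitimate sufficient condition.

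The main obstacle is exactly the case $\|\nabla L(x^{(j+1)})\|\ge\varepsilon$: one must control a quantity evaluated at the \emph{next} iterate using only information available at the \emph{current} iterate, and it must be done without picking up a factor that depends on the possibly large quantity $\|\nabla L(x^{(j)})\|$. The key trick that makes the argument go through is to replace $\|\nabla L(x^{(j)})\|$ by $\max\{\varepsilon,\|\nabla L(x^{(j)})\|\}$ inside the Lipschitz estimate; this places the same maximum on both sides of the final inequality so that it cancels, giving a step-size bound that depends only on $\eta_j$, $M$ and $\varepsilon$. Without this cancellation, a large gradient at iteration $j$ would force $\eta_{j+1}$ to be artificially tiny even in regimes in which the update step is already under control.
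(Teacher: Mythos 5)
Your proof is correct and follows essentially the same route as the paper's: sandwiching the update-step norm between $\tfrac12$ and $\tfrac52$ times $\max\{\varepsilon,\|\nabla L\|\}$ via the triangle inequality, then using $M$-Lipschitz continuity of $\nabla L$ to bound $\|\nabla L(x^{(j+1)})\|$ in terms of iteration-$j$ quantities. Your uniform use of $m(x):=\max\{\varepsilon,\|\nabla L(x)\|\}$ throughout is in fact a slightly cleaner bookkeeping than the paper's separate case split on whether $m(x^{(j)})$ equals $\varepsilon$ or $\|\nabla L(x^{(j)})\|$, but the underlying argument is the same.
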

\begin{proof}
    Suppose first that $\|\nabla L(x^{(j+1)})\| \geq \varepsilon$. Denote by $m(x):=\max\{\varepsilon,\|\nabla L(x)\|\}$. It is sufficient to show there exists $\eta_{j+1}>0$ such that
        \begin{align}
            \nonumber
            \left\|\eta_{j+1} \nabla L(x^{(j+1)}) +  \right.& \left.\frac{3\eta_{j+1}}{2} \overset{\rightarrow}{dir}(C(x^{(j+1)})) m(x^{(j+1)}))\right\| \\ \nonumber
            &\quad \leq \left\|\eta_{j} \nabla L(x^{(j)}) + \frac{3\eta_{j}}{2} \overset{\rightarrow}{dir}(C(x^{(j)})) m(x^{(j)}) \right\|,
        \end{align}
    for a given $\eta_j>0$. Assume that $m(x^{(j)})=\|\nabla L(x^{(j)})\|$. By construction of the update step, it follows that
        \begin{align*}
            \frac{\eta_j}{2} \|\nabla  L(x^{(j)})\| 
            & \leq \left\|\eta_j \nabla L(x^{(j)}) + \frac{3\eta_j}{2} \overset{\rightarrow}{dir}(C(x^{(j)})) \|\nabla L(x^{(j)})\|\right\| \\
            & \leq \frac{5\eta_j}{2} \|\nabla L(x^{(j)})\|,
        \end{align*}
    for each $j$ and each $x^{(i)}\in\mathbb{R}^n$. Hence it is sufficient to find $\eta_{j+1}>0$ such that
        \begin{equation}\label{eq:lemUpdateStep1}
            \frac{5\eta_{j+1}}{2} \|\nabla L(x^{(j+1)})\| \leq \frac{\eta_j}{2} \|\nabla L(x^{(j)})\|, \qquad \forall x\in\mathbb{R}^n.
        \end{equation}
    Observe that taking $0<\eta_{j+1}< \frac{\eta_j}{5}$ is sufficient in the case where $\|\nabla L(x^{(j+1)})\| \leq \|\nabla L(x^{(j)})\|$. So assume from now on that $\|\nabla L(x^{(j+1)})\| > \|\nabla L(x^{(j)})\|$. By the reverse triangle inequality and $\nabla L$ being $M$-Lipschitz it follows that
        \begin{equation}\label{eq:lemUpdateStep2}
            \|\nabla L(x^{(j+1)}) \| \leq \frac{5\eta_j M + 2}{2} \|\nabla L(x^{(j)})\|,
        \end{equation}
    for each $x^{(j)}\in \mathbb{R}^n\setminus FR$ and $\eta_j>0$. Combining \eqref{eq:lemUpdateStep1} and \eqref{eq:lemUpdateStep2} and rewriting yields that it is sufficient to take
        \begin{equation*}
            \eta_{j+1} \leq \frac{2\eta_j}{25\eta_j M +10},
        \end{equation*}
    which exists because $M$ is a strictly positive fixed real number, and $\eta_j$ is known and non-zero.
    
    In the case where $m(x^{(j)}) = \varepsilon$, the right-hand side of \eqref{eq:lemUpdateStep1} becomes $\frac{\eta_j}{2} \varepsilon$. This implies that it is sufficient to take
        \begin{equation*}
            \eta_{j+1} \leq \frac{2\eta_j \varepsilon}{25\eta_j M +10},
        \end{equation*}
    because $0<\varepsilon < 1$.
    
    At last, when $\|\nabla L (x^{(j+1)})\| < \varepsilon$ then analogous to the previous it follows that it is sufficient to take $\eta_{j+1} \leq \frac{\eta_j}{5\varepsilon}m(x^{(j)})$.
\end{proof}

The second lemma shows that the distance towards the feasible region can be made infinitely small when necessary. This will allow for stating everything locally around a point on the boundary. But first, some notation is introduced for the closed ball around a certain point with a given radius.

\begin{Definition}[Closed ball]
    The closed ball around a point $x$ of radius $\delta$ is given by
        \begin{equation*}
            B[x,\delta] := \{z\mid d(x,z) \leq \delta\}.
        \end{equation*}
\end{Definition}

\begin{Lemma}\label{lem:DistanceToFRSmall}
    Let $\varepsilon>0$ be a fixed constant and denote $m(x):=\max\{\varepsilon, \|\nabla L(x)\|\}$. If $K= d(x^{(j)},FR)$ and $\frac{3}{2} \eta_j m(x^{(j)}) < K$, then $d(x^{(j+1)}, FR) \leq K - \frac{\eta_j}{2} m(x^{(j)})$.
\end{Lemma}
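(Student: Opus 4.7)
The plan is to bound $d(x^{(j+1)}, FR)$ by the distance from $x^{(j+1)}$ to a specific point of $FR$, namely the projection of $x^{(j)}$ onto $FR$. First I would pick a point $p \in FR$ realising $K = d(x^{(j)}, FR) = \|x^{(j)} - p\|$ (existence is guaranteed whenever one has a closest point, which is implicit in the definition of the shortest-path direction). By the very definition of $\overset{\rightarrow}{dir}(C(x^{(j)}))$ as the unit vector of the shortest path from $FR$ to $x^{(j)}$, one can write $x^{(j)} - p = K \cdot \overset{\rightarrow}{dir}(C(x^{(j)}))$.

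Next I would plug the update rule \eqref{def:UpdateStep} into $x^{(j+1)} - p$ and collect the terms parallel to $\overset{\rightarrow}{dir}(C(x^{(j)}))$:
\begin{equation*}
    x^{(j+1)} - p = \left(K - \tfrac{3\eta_j}{2} m(x^{(j)})\right) \overset{\rightarrow}{dir}(C(x^{(j)})) - \eta_j \nabla L(x^{(j)}).
\end{equation*}
The hypothesis $\frac{3}{2}\eta_j m(x^{(j)}) < K$ ensures that the coefficient in front of $\overset{\rightarrow}{dir}(C(x^{(j)}))$ is strictly positive, so the triangle inequality gives
\begin{equation*}
    \|x^{(j+1)} - p\| \leq K - \tfrac{3\eta_j}{2} m(x^{(j)}) + \eta_j \|\nabla L(x^{(j)})\|.
\end{equation*}

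Finally, since $m(x^{(j)}) = \max\{\varepsilon, \|\nabla L(x^{(j)})\|\} \geq \|\nabla L(x^{(j)})\|$, the last term is at most $\eta_j m(x^{(j)})$, which collapses the bound to $K - \frac{\eta_j}{2} m(x^{(j)})$. Because $p\in FR$, one has $d(x^{(j+1)}, FR) \leq \|x^{(j+1)} - p\|$, and the claim follows.

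I do not expect a real obstacle here: the only points that require a little care are (i) noting that the hypothesis on $\eta_j$ is precisely what is needed to remove the absolute value on $K - \frac{3\eta_j}{2} m(x^{(j)})$, and (ii) using the projection $p$ of $x^{(j)}$ (not of $x^{(j+1)}$) as the reference point in $FR$, which is fine because we only need an upper bound on $d(x^{(j+1)}, FR)$. The gradient contribution is controlled purely by the definition of $m(\cdot)$, so no Lipschitz assumption on $\nabla L$ is invoked in this lemma.
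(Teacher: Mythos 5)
Your proof is correct and follows essentially the same route as the paper's: both use the closest point $p$ (the paper calls it $y$) of $FR$ to $x^{(j)}$ as the reference, decompose the update into a component of size $\tfrac{3}{2}\eta_j m(x^{(j)})$ along $-\overset{\rightarrow}{dir}(C(x^{(j)}))$ plus a gradient perturbation of norm at most $\eta_j\|\nabla L(x^{(j)})\| \leq \eta_j m(x^{(j)})$, and use the hypothesis $\tfrac{3}{2}\eta_j m(x^{(j)}) < K$ to keep the sign of the leading coefficient. The only difference is presentational: the paper phrases the bound geometrically via a closed ball of radius $\eta_j\|\nabla L(x^{(j)})\|$ around the shifted point, whereas you apply the triangle inequality directly, which is arguably cleaner.
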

\begin{proof}
    Denote by $y$ the point in $FR$ that gives rise to the vector $\overset{\rightarrow}{dir}(C(x^{(j)}))$. The collection of points that can be obtained from the update step are given by the boundary of $$A:=B\left[x^{(j)} - \frac{3}{2} \eta_j \overset{\rightarrow}{dir} (C(x^{(j)}))m(x^{(j)}), \eta_j \|\nabla L(x^{(j)})\|\right].$$ It is easy to observe that the point furthest away from $y$ is given by $x^{(j)}- \frac{3m(x^{(j)}) }{2} \eta_j \overset{\rightarrow}{dir}(C(x^{(j)})) + \eta_j \|\nabla L(x^{(j)}) \|$ since $\overset{\rightarrow}{dir}(C(x^{(j)}))$ defines the shortest path from $FR$ to $x^{(j)}$. Moreover, when $\|\nabla L(x^{(j)})\| < \varepsilon$ it holds that
        \begin{equation*}
            \frac{3}{2} m(x^{(j)}) - \|\nabla L(x^{(j)}) \| > \frac{1}{2} m(x^{(j)}).
        \end{equation*}
    Hence, it holds that for each $z\in A:$ $$d(x^{(j)} - \frac{\eta_j}{2} \overset{\rightarrow}{dir}(C(x^{(j)})) m(x^{(j)}), y ) \geq d(z,y).$$ The claim follows now from the observation that 
        \begin{align*}
            d(x^{(j+1)}, FR) &\leq d(x^{(j)} - \frac{\eta_j}{2} \overset{\rightarrow}{dir}(C(x^{(j)})) m(x^{(j)}),y)
            \\
            &= K - \frac{\eta_j}{2} m(x^{(j)}).
        \end{align*}
    
\end{proof}

\begin{proof}[Proof of Theorem \ref{thm:ConvergenceTheorem}]
    From Assumption \ref{axiom:NonConvexOptimal}, it follows that if the initialization is inside the feasible region $FR$ and the optimization procedure does not leave $FR$ that convergence occurs. Additionally, if the optimization procedure reaches $FR$ after a finite number of steps and stays in $FR$ for the remaining updates, then convergence occurs as well as a consequence of Assumption \ref{axiom:NonConvexOptimal}.
    
    Observe that combining Lemma \ref{lem:DistanceToFRSmall} and Lemma \ref{lem:DecreasingUpdateStep} and decreasing $\eta_j$ only when $d(x^{(j)}, FR) < \frac{3}{2} \eta_j \max\{\varepsilon, \|\nabla L(x^{(j)}) \|\}$, it follows that the distance to $FR$ can be made arbitrarily small. In particular, this even holds true if no point in $FR$ is found. Therefore, it follows that in this case convergence occurs to some point on the boundary because the size of the update step converges to 0.
    
    Note that when a point $x^{(j)}$ is found in $FR$ for which the next point $x^{(j+1)}$ in the iteration procedure is outside of $FR$, then the distance of $x^{(j+1)}$ to $FR$ is bounded because the update step size can be made bounded and decreasing. Therefore, the size to the feasible region is bounded from above by choosing $\eta_i$ accordingly. Since $\overset{\rightarrow}{dir}(C)$ denotes the closest path to $FR$, it follows that the update step of the constraints does not change the objective value of the point in $FR$ that is closest to the current point outside $FR$. Therefore, if a point $z$ on the boundary $\partial FR$ of the feasible region is optimal, then the gradient of $L$ defines locally $-\overset{\rightarrow}{dir}(C)$ which means that convergence is obtained for a suitable choice of $\eta_j$. To finish the prove it is sufficient to observe that Assumption \ref{axiom:NonConvexOptimal} implies that $L$ is not strictly decreasing when $\left\|x^{(j)}\right\|\xrightarrow{j\to \infty}\infty$ and hence it is not possible for a suitable choice of $\eta_j$ that the objective value keeps decreasing along the boundary of $FR$.
    
\end{proof}

\end{document}